\documentclass[letterpaper, 10 pt, conference]{ieeeconf}  

\IEEEoverridecommandlockouts                              
\usepackage{algorithm}
\usepackage{amsmath}
\usepackage{amssymb}
\usepackage{url}
\usepackage{hyperref}
\usepackage{bm}
\usepackage{booktabs}
\usepackage{calc}
\usepackage{caption}
\usepackage{subcaption}
\usepackage[table]{xcolor} 
\usepackage{enumerate}
\usepackage{graphicx}
\usepackage{multicol}
\usepackage{siunitx}
\usepackage{cite}

\definecolor{dunkelblau}{rgb}{0.0, 0.2314, 0.6196}%
\definecolor{hellblau}{rgb}{0.000, 0.7451, 1.0000}%
\definecolor{rot}{rgb}{0.6980,0.1333,0.1333}%
\definecolor{cardinal}{rgb}{0.6156, 0.1333, 0.2078}
\definecolor{palo}{rgb}{0, 0.4157, 0.3216}
\definecolor{lagunita}{rgb}{0, 0.4863, 0.5725}
\newcommand{\mycomment}[1]{}

\newlength{\mlLegendThickness}
\newlength{\mlLegendHeight}
\newcommand{\mlLineLegend}[1]{\mbox{\color{#1}
		\protect\rule[\mlLegendHeight]{3mm}{\mlLegendThickness}\hspace*{-1mm}
}}

\newtheorem{theorem}{Theorem}

\newtheorem{proposition}[theorem]{Proposition}

\usepackage[font=small,skip=0pt]{caption}
\title{\LARGE \bf
Infinite-Dimensional Closed-Loop Inverse Kinematics for Soft Robots via Neural Operators
}

\author{Carina Veil$^{1}$, Moritz Flaschel$^{2}$, 
 Ellen Kuhl$^{1}$, and Cosimo Della Santina$^{3}$
 	\thanks{This work was supported by the NSF CMMI Award 2318188 Mechanics of Bioinspired Soft Slender Actuators and the ERC Advanced Grant 101141626 DISCOVER. Corresponding author: Carina Veil.}
 	\thanks{$^{1}$C. Veil and E. Kuhl are with the Department of Mechanical Engineering, Stanford University, Stanford, CA 94305, USA. 
     {\tt\small \{cveil,ekuhl\}@stanford.edu}}
    \thanks{$^{2}$ M. Flaschel is with the Institute of Applied Mechanics, Friedrich-Alexander-Universität Erlangen–Nürnberg, 91058 Erlangen, Germany.
 		{\tt\small moritz.flaschel@fau.de}}
     \thanks{$^{3}$C. Della Santina is with the Cognitive Robotics Department, Delft University of Technology, 2628 Delft, The Netherlands. 
     {\tt\small c.dellasantina@tudelft.nl}}
 }

\begin{document}

\maketitle
\thispagestyle{empty}
\pagestyle{empty}

\begin{abstract}
For fully actuated rigid robots, kinematic inversion is a purely geometric problem, efficiently solved by closed-loop inverse kinematics (CLIK) schemes that compute joint configurations to position the robot body in space.
For underactuated soft robots, however, not all configurations are attainable through control action, making kinematic inversion extremely challenging. 
Extensions of CLIK address this by introducing end-to-end mappings from actuation to task space for the controller to operate on, but typically assume finite dimensions of the underlying virtual configuration space.
In this work, we formulate CLIK in the infinite-dimensional domain to reason about the entire soft robot shape while solving tasks. We do this by composing an actuation-to-shape map with a shape-to-task map, deriving the differential end-to-end kinematics via an infinite-dimensional chain rule, and thereby obtaining a Jacobian-based CLIK algorithm. Since this actuation-to-shape mapping is rarely available in closed form, we propose to learn it using differentiable neural operator networks.
We first present an analytical study on a constant-curvature segment, and then apply the neural version of the algorithm to a three-fiber soft robotic arm whose underlying model relies on morphoelasticity and active filament theory.
\end{abstract}

\section{Introduction}
Kinematic inversion refers to finding a feasible robot configuration such that the robot assumes a desired positioning of one or more of its parts in space. For a rigid manipulator, this translates to computing joint positions that result in a desired end-effector pose, given the geometric link parameters.
This formulation, however, assumes full actuation of rigid robots. 
When a robot is soft, it can bend and twist continuously rather than only at discrete joints. This flexibility enables navigation in complex environments and safe interaction with delicate objects, but it also makes control, even basic kinematic inversion, challenging.

For \emph{fully actuated} systems, Jacobian-based closed-loop inverse kinematics ({CLIK}) schemes exploit 
the geometric freedom to select joint configurations directly, 
by using feedback in task-space to map back to required joint velocities via the Jacobian inverse \cite{fiore2023convergence, kenwright2022real, colan2024variable}.
%
However, in \emph{under-actuated} systems with fewer degrees of actuation than degrees of freedom, such as soft robots, 
not all configurations are attainable through control action. Static kinematic constraints must be enforced explicitly, typically via a numerical optimization, which is then embedded, for example, in a model predictive control fashion \cite{wang2024hierarchical, votroubek2023globally}. A compact, closed-loop form similar to the fully actuated case only becomes possible once we introduce an end-to-end kinematic mapping from the actuator inputs to the task-space output \cite{webster2010design}: This mapping plays the role of a virtual joint-space model, like a controllable subspace, on which standard CLIK algorithms can then operate \cite{campisano2021closed, fang2022efficient, rogatinsky2023multifunctional}.
%
This so-called \emph{actuator-to-task closed-loop inverse kinematics} approach was formalized for underactuated mechanical systems in \cite{della2025pushing}. Yet, still assuming finite dimensions for the underlying virtual joint space. i.e., relying on the common soft robot approximation that the shape is linearly parametrizable with a set of basis functions. 

\begin{figure*}
    \centering
    \includegraphics[width=\linewidth]{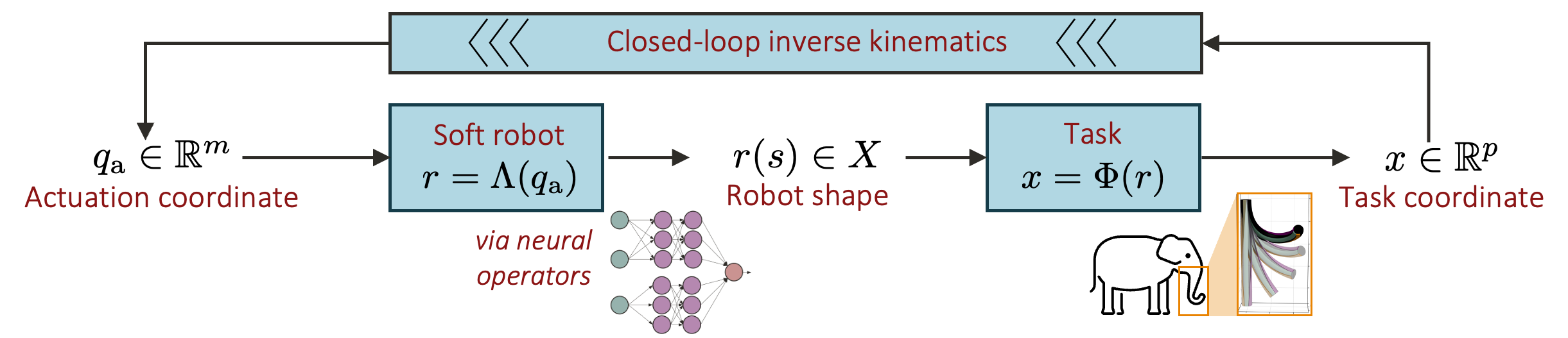}
    \caption{\textbf{(Neural) Infinite-dimensional closed-loop inverse kinematics (CLIK)}: We propose an extension of finite-dimensional CLIK that allows us to reason on the whole soft robot shape while solving tasks. Since analytical expressions of the Jacobian are difficult to obtain in practice for infinite-dimensional soft robot models, we embed a \emph{neural} version of the algorithm that uses the neural operator of the robot learned from simulations.}
    \label{fig:placeholder}
\end{figure*}

However, many soft-robot tasks require reasoning about the \textit{entire body shape}, 
such as positioning any suitable point to a target or enforcing contact constraints along the body.
In this work, we extend CLIK to systems with infinite-dimensional shape spaces, while keeping the controller finite-dimensional through end-to-end mappings from finite actuators to finite task dimensions.
To this end, we define two mappings:
(i)~the mapping from actuators to shape, that is, the ``internal'' model of the (soft) robot,
(ii)~the forward kinematics from an infinite-dimensional shape space to a task, 
whose composition yields the \emph{end-to-end forward kinematics} from actuation to task.
Combining the gradient operators of both mappings via an infinite-dimensional chain rule yields the end-to-end differential forward kinematics, enabling the formulation of Jacobian-based CLIK.
Since soft robots’ actuator-to-shape mappings often lack closed-form expressions or explicit Jacobians, this is only practical with a \emph{learned}, differentiable model of this map. 
To address this, we propose to learn the mapping from simulation data using a \emph{neural operator network}, which, unlike traditional neural networks, learns mappings between infinite-dimensional function spaces without fixed discretization, while providing well-defined gradients with respect to the finite-dimensional inputs \cite{anandkumar2020Neurala,lu2021Learning}. 
Neural operators are a recent trend in machine learning that have found wide interest in control and (soft) robotics already, for example as surrogate models for complex dynamics \cite{campbell2025Active, gao2024SimtoReal, ma2023Learning}, 
or to make computationally expensive control schemes either real-time feasible or differentiable \cite{bhan2023Neural, de2025deep, hu2025Neural}.
For us, they enable reasoning about the full infinite-dimensional soft robot shape during tasks.

\section{Preliminaries: Finite-Dimensional Closed-Loop Inverse Kinematics}\label{sec:prelim}

We will briefly review the finite-dimensional case and unify notations, but refer to \cite{siciliano1990closed, della2025pushing} for details.

\textbf{Kinematic inversion.}
Let $q\in\mathbb{R}^n$ be the configuration space of a mechanical system with $n$ degrees of freedom. Further, let $x \in\mathbb{R}^p$ be a $p$-dimensional task coordinate, commonly the end-effector position and orientation, such that the robot fulfilling the tasks is expressed through the forward kinematics
\begin{align}
    \Phi_\text{fin}: \mathbb{R}^n \rightarrow \mathbb{R}^p, \ q \mapsto x = \Phi_\text{fin}(q).\label{clik-1}
\end{align}
The goal of kinematic inversion is to, given a desired task coordinate $\bar x$, find a configuration $\bar q$ such that $\bar x = \Phi_\text{fin}(\bar q)$.

\textbf{Closed-loop inverse kinematics (CLIK).} 
CLIK solves the kinematic inversion in the form of dynamical system
\begin{align}
    \dot q = c_\text{IK}(q,\bar x) \qquad
    \text{s.t.} \lim_{t\to\infty}\Phi_\text{fin}(q(t))=\bar x.
\end{align}
A standard choice that ensures exponential convergence under mild assumptions is
\begin{align}
    \dot q = J_{\Phi_\text{fin}}(q)^{-1}K(\bar x - \Phi_\text{fin}(q)) \label{eq:clik-fully-actuated}
\end{align}
with positive definite gain matrix $K$ and Jacobian $J_{\Phi_\text{fin}}$, which is square and invertible for the fully actuated case.

\textbf{CLIK for underactuated systems.}
For underactuated systems, we cannot control the $n$-dimensional configuration space, but we can extend the above to finding a control $c_\text{IK}$ that also respects static constraints, i.e.,
\begin{align}
    \dot q &= c_\text{IK}(q, \bar x) \notag \\
    &\text{s.t.} \lim_{t\to\infty}\Phi_\text{fin}(q(t)) = \bar x \text{ and } \lim_{t\to\infty}q(t)\in\mathcal{E},\label{eq:underactuated-clik-problem}
\end{align}
where $\mathcal{E}$ is the attainable equilibria set.
To realize this, we introduce an end-to-end mapping from actuators to task output that creates a virtual configuration subspace controllable by actuators on which CLIK can operate. Therefore, consider the finite-dimensional mapping
\begin{align}
\Lambda_\text{fin}:\mathbb{R}^m \to \mathbb{R}^n ,\ \ q_\text{a} \mapsto q,
\end{align}
from actuation coordinates $q_\text{a}$ with $m$ independent degrees of actuation to configuration coordinates $q$, such that the composed forward pseudo-kinematics 
are
\begin{align}
    (\Phi \circ \Lambda)_\text{fin}: \ \mathbb{R}^m  &\to \mathbb{R}^p, \notag \\  
    q_\text{a} &\mapsto   x = (\Phi \circ \Lambda)_\text{fin}(q_\text{a}).
\end{align}
Considering a square inversion ($m=p$) with Jacobian $J_{(\Phi \circ \Lambda)_\text{fin}}(q_\text{a})$, the standard gradient-based CLIK algorithm is
\begin{align}
    \dot q_\text{a} = \left(J_{(\Phi \circ \Lambda)_\text{fin}}(q_\text{a})\right)^{-1} K (\bar x - (\Phi \circ \Lambda)_\text{fin}(q_\text{a})).\label{eq:clik-finite}
\end{align}
A positive definite gain matrix $K \in \mathbb{R}^{m \times m}$ ensures exponential convergence analogous to the fully actuated case.

\section{Infinite-Dimensional Closed-Loop Inverse Kinematics}\label{sec:inf-clik}
For the infinite-dimensional analogue of~\eqref{eq:clik-finite}, we introduce a (soft) robot model via the \textit{actuation-to-shape} mapping
\begin{align}
\Lambda:\mathbb{R}^m \to X,\ \ q_\text{a} \mapsto r=\Lambda(q_\text{a}),\label{eq:actuation-to-shape}
\end{align}
where $X=L^2([0,1];\mathbb{R}^3)$ is a Hilbert space of square-integrable curves with respect to the normalized spatial coordinate $s\in[0,1]$, and  equipped with the standard $L^2$-inner product. Furthermore, $r:[0,1] \to \mathbb{R}^3$ is a $C^1$ curve representing the soft robot shape in space\footnote{Normalized arc-length $s\in[0,1]$ is used without loss of generality via the standard reparameterization $s = \tilde{s}/L$. The formulation also holds in $\mathbb{R}^2$.}.
Note that we parametrize attainable shapes by actuator coordinates $q_\text{a}\in\mathbb{R}^m$, so the attainable-equilibria manifold forms an $m$-dimensional submanifold of the infinite-dimensional function space $X$.
Next, let $\Phi$ denote a $p$-dimensional task, i.e. the \textit{shape-to-task} mapping
\begin{align}
    \Phi: X \rightarrow \mathbb{R}^p, \quad r \mapsto x = \Phi(r).
\end{align}
such that the \textit{end-to-end} mapping from actuation to task on which CLIK operates is the composition 
\begin{align}
    \Phi \circ \Lambda: \mathbb{R}^m \rightarrow \mathbb{R}^p, \ \ q_\text{a} \mapsto x = (\Phi \circ \Lambda)(q_\text{a}).\label{eq:actuation-to-shape-infinite}
\end{align}

\subsection{Differential Kinematics}
To realize the infinite-dimensional CLIK version of~\eqref{eq:clik-finite}, we need the end-to-end Jacobian $J_{\Phi\circ\Lambda}(q_\text{a}) \in \mathbb{R}^{p\times m}$, which is obtained via the differentials of the component mappings $\Lambda$ and $\Phi$, connected by the chain rule.

\textbf{Actuation-to-shape.}
The Fr\'echet derivative of $\Lambda$ at $q_\text{a}$, $\Lambda' (q_\text{a}) $, is a bounded linear map 
\begin{align}
    &\Lambda' (q_\text{a}): \mathbb{R}^m\mapsto X, \notag \\
    &\quad \Lambda (q_\text{a} + \delta q_\text{a}) = \Lambda(q_\text{a}) + \Lambda'(q_\text{a}) [\delta q_\text{a}] + o(\Vert\delta q_\text{a}\Vert),
\end{align}
that maps small changes in actuators to small changes in shape. Since our actuation is finite dimensional, the Fr\'echet derivative reduces to a linear combination of partial derivatives in the function space,
\begin{align}
    \Lambda'(q_\text{a}) [\delta q_\text{a}] = \sum_{i=1}^{m} \delta q_{\text{a},i} \frac{\partial \Lambda(q_\text{a})}{\partial q_{\text{a},i}} ,\label{eq:actuation-to-shape-differential} \\
    \quad \frac{\partial \Lambda(q_\text{a})}{\partial q_{\text{a},i}}  =:\partial_{q_{\text{a},i}} \Lambda(q_\text{a}) \in X.\notag  
\end{align}

\begin{figure}[t!p]
    \centering
    \includegraphics[width=\linewidth]{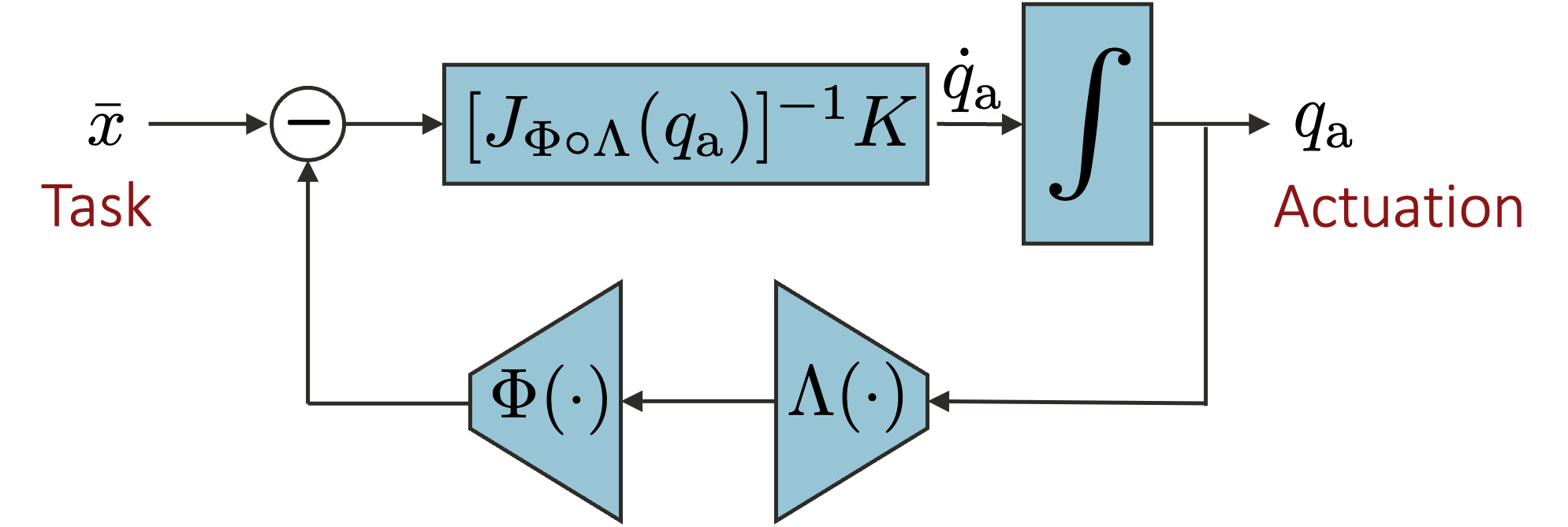}
    \caption{Block diagram of the infinite-dimensional closed-loop inverse kinematics~\eqref{eq:clik-infinite} in actuation space $q_\text{a}\in \mathbb{R}^m$, with a square actuation-to-task mapping. Trapezoids emphasize the change in dimensionality from input to output.}
    \label{fig:clik-scheme}
\end{figure}

\textbf{Shape-to-task.}
For a perturbation $v:[0,L]\mapsto \mathbb{R}^3$, define the perturbed shape $r_\varepsilon= r +\varepsilon v$ and the directional derivative 
\begin{align}
    \Phi'(r)[v] = \frac{d}{d \varepsilon}\Phi(r_\varepsilon)\big|_{\varepsilon=0}.
\end{align}
The directional derivative $\Phi'(r)[v]$ defines a bounded linear functional on the tangent space of admissible perturbations $v$. 
Endowing this space with a Hilbert structure (such as $X=L^2([0,1];\mathbb{R}^3)$) with inner product
\begin{align}
    \langle u,v \rangle _{L^2}=\int_0^1 u(z) \cdot v(z)dz, 
\end{align}
the Riesz representation theorem \cite{neuberger2009sobolev} ensures the existence of a unique element $\nabla_{X} \Phi(r) \in X $ such that
\begin{align}
    \Phi'(r) [v] = \langle \nabla_{X} \Phi(r) , v \rangle _{X}, \quad \forall v \in X, \label{eq:task-to-shape-differential}
\end{align}
i.e. the $L^2$ gradient of $\Phi$ at $r$.

\textbf{End-to-end.}
For the CLIK algorithm, we consider the differential end-to-end kinematics
\begin{align}
    \dot x = J_{\Phi \circ \Lambda}(q_\text{a})\dot q_\text{a}, \ J_{\Phi \circ \Lambda}\in \mathbb{R}^{p \times m}. \label{eq:end-to-end-differential}
\end{align}
For any direction $ \dot q_\text{a}\in\mathbb{R}^{m}$, by chain rule, the derivative of this composition is
\begin{align}
    (\Phi \circ \Lambda)'( q_\text{a}) [ \dot  q_\text{a}] = 
    \big( \Phi'(r)\big)
    \big(\Lambda'( q_\text{a})[\dot  q_\text{a}]\big).
\end{align}
Together with the task differential $\Phi'(r) [v]$~\eqref{eq:task-to-shape-differential}, this yields
\begin{align}
(\Phi \circ \Lambda)'( q_\text{a}) [ \dot  q_\text{a}]
&= \big\langle \nabla_{X} \Phi(r),\; \Lambda'( q_\text{a})[\dot q_\text{a}]\big\rangle_X \\
&= \big\langle \Lambda'( q_\text{a})^{*}\,\nabla_{X} \Phi(r),\; \dot q_\text{a}\big\rangle_{\mathbb{R}^m},
\end{align}
where $\Lambda'( q_\text{a})^*: X \rightarrow \mathbb{R}^m$ is the Hilbert adjoint. 
Comparing this with the differential end-to-end kinematics~\eqref{eq:end-to-end-differential}, we note that the Jacobian of the composition must be
\begin{align}
    J_{\Phi\circ\Lambda}( q_\text{a}) = \Lambda'( q_\text{a})^* \nabla_{X} \Phi(\Lambda( q_\text{a})) \in \mathbb{R}^{p\times m},
\end{align}
and its $i$-th column is calculated as
\begin{align}
J_{\Phi\circ\Lambda, :, i}(q_\text{a}) &= \langle \nabla_{X} \Phi(\Lambda( q_\text{a})), \partial_{ q_{\text{a},i}} \Lambda( q_\text{a})\rangle_{X},\\
&= \int_0^L \partial_{ q_{\text{a},i}} \Lambda( q_\text{a})(z) \cdot  \nabla_{X} \Phi(\Lambda( q_\text{a}))(z) \ dz
\end{align}
for $i=1,2,..,m$.
When $p=m$ and $J_{\Phi\circ\Lambda}(q_\text{a})$ is nonsingular, following the usual steps for kinematic inversion yields the final CLIK algorithm~(cf. Figure~\ref{fig:clik-scheme})
\begin{subequations}
\begin{align}
    \dot q_\text{a} &= \left(J_{\Phi \circ \Lambda}(q_\text{a})\right)^{-1} K (\bar x - (\Phi \circ \Lambda)(q_\text{a})), \\
    r &= \bar r(q_\text{a}) 
\end{align} \label{eq:clik-infinite}
\end{subequations}
with positive definite gain matrix $K \in \mathbb{R}^{m \times m}$.


\begin{proposition}
    The closed-loop kinematic inversion~\eqref{eq:clik-infinite} solves~\eqref{eq:underactuated-clik-problem} exponentially fast $\forall K \succ 0$.
\end{proposition}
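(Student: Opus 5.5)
The plan is to study the task-space error $e(t) := \bar x - (\Phi\circ\Lambda)(q_\text{a}(t)) \in \mathbb{R}^m$ and show that it obeys a linear, exponentially stable ODE. Membership of the shape in the attainable equilibria set will then follow directly from the structure of \eqref{eq:clik-infinite}.

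\textbf{Error dynamics.} Since $\bar x$ is constant, the chain-rule result \eqref{eq:end-to-end-differential}, with the Jacobian $J_{\Phi\circ\Lambda}(q_\text{a}) = \Lambda'(q_\text{a})^*\nabla_X\Phi(\Lambda(q_\text{a}))$ derived above, gives
\begin{align*}
\dot e = -J_{\Phi\circ\Lambda}(q_\text{a})\,\dot q_\text{a} = -J_{\Phi\circ\Lambda}(q_\text{a})\,J_{\Phi\circ\Lambda}(q_\text{a})^{-1}K e = -K e .
\end{align*}
This step uses two assumptions: $\Lambda$ is Fréchet differentiable (so \eqref{eq:actuation-to-shape-differential} holds) and $\Phi$ is differentiable with Riesz gradient as in \eqref{eq:task-to-shape-differential}. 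Then $t\mapsto (\Phi\circ\Lambda)(q_\text{a}(t))$ is $C^1$, and the infinite-dimensional chain rule reduces the computation to the finite-dimensional identity above. Because $m=p$, the product $J J^{-1}$ is the $m\times m$ identity.

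\textbf{Exponential convergence.} From $\dot e=-Ke$ we get $e(t)=e^{-Kt}e(0)$. For $K\succ 0$, take the Lyapunov function $V=\tfrac12 e^\top e$. Then $\dot V = -e^\top K e \le -\lambda_{\min}(K)\,\|e\|^2$, where $\lambda_{\min}$ is taken of the symmetric part of $K$. Hence $\|e(t)\|\le e^{-\lambda_{\min}t}\|e(0)\|$, and the task limit in \eqref{eq:underactuated-clik-problem} holds at an exponential rate.

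\textbf{Equilibrium constraint.} Along the whole trajectory the shape is $r(t)=\Lambda(q_\text{a}(t))$. By construction this lies in the $m$-dimensional submanifold $\Lambda(\mathbb{R}^m)$ of attainable equilibria, i.e.\ in the image of $\mathcal{E}$ in $X$. Every configuration visited is therefore admissible, including any limit point, provided $q_\text{a}(t)$ converges. Convergence of $q_\text{a}$ itself follows if $\|J^{-1}\|$ is bounded along the trajectory: then $\|\dot q_\text{a}\|\le C e^{-\lambda_{\min}t}$ is integrable, $q_\text{a}(t)$ is Cauchy, and continuity of $\Lambda$ places the limit in $\Lambda(\mathbb{R}^m)$.

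\textbf{Main obstacle.} The delicate point is well-posedness, namely that the trajectory never reaches a singularity of $J_{\Phi\circ\Lambda}$ and that the solution exists for all $t$. I would make this an explicit standing assumption, matching the hypothesis stated before \eqref{eq:clik-infinite}: $J_{\Phi\circ\Lambda}$ is nonsingular with uniformly bounded inverse on the region swept by the trajectory. Under that assumption the vector field is locally Lipschitz, provided $\Lambda$ and $\Phi$ are $C^2$. Local existence then holds, and the exponential bound on $\dot q_\text{a}$ rules out finite-time escape.
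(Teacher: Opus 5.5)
Your proof is correct and follows the same route as the paper: you substitute the control law into the end-to-end differential kinematics $\dot x = J_{\Phi\circ\Lambda}(q_\text{a})\dot q_\text{a}$ to obtain the linear error dynamics $\dot e = -Ke$, and exponential convergence follows from that. Your additional pieces make explicit what the paper's short proof leaves implicit: the Lyapunov bound via the symmetric part of $K$, the argument that $r(t)=\Lambda(q_\text{a}(t))$ stays in the attainable set, convergence of $q_\text{a}$ under a uniformly bounded $J_{\Phi\circ\Lambda}^{-1}$, and the standing nonsingularity and well-posedness assumption.
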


\begin{proof}
    First, we formulate the differential kinematics of the end-to-end mapping~\eqref{eq:end-to-end-differential}, and plug in the proposed controller~\eqref{eq:clik-infinite}
    \begin{align}
        \dot x &= J_{\Phi \circ \Lambda}(q_\text{a})\dot q_\text{a} \\
        &= J_{\Phi \circ \Lambda}(q_\text{a})
        \left(J_{\Phi \circ \Lambda}(q_\text{a})\right)^{-1} K (\bar x - (\Phi \circ \Lambda)(q_\text{a}))\\
        &= K (\bar x - (\Phi \circ \Lambda)(q_\text{a})).
    \end{align}
    We observe that the resulting error dynamics $\dot{\bar{x}} - \dot x=-K(\bar x - x)$ is linear and converges exponentially fast to zero with the rate defined by $K$. Consequently, the steady state of~\eqref{eq:clik-infinite} fulfills $\lim_{t\to\infty}\Phi(q(t))=\bar x$.
\end{proof}

\subsection{Representative Tasks and Resulting Jacobians}
We can think of many functionals $\Phi: X \to \mathbb{R}^p$ in the task space of a robot. 
The most common examples involve reaching a target or a prescribed distance from it for a specific point along the robot’s body, typically the end-effector for rigid robots.
While such tasks are equally relevant for soft robots, their continuous nature invites an adaptation of this formulation: rather than requiring the end-effector to reach the target, we may instead define the task such that the point along the robot’s body closest to the target reaches it (cf. Figure~\ref{fig:tasks}).

\textbf{Positioning tasks.}
Reaching a target point $x_0$ (or time-varying $x_0(t)$) at a fixed centerline coordinate $\bar s \in [0,1]$ (or $\bar s(t)$) of the robot centerline $r$ translates to
\begin{align}
    \Phi_{\text{pos}}^{\text{fixed}}(r) &= r(\bar s) - x_0. 
\end{align}
However, our infinite-dimensional CLIK shines for the more expressive \emph{closest-point} task, which automatically identifies the best location along the entire robot shape to reach the target. This is formulated as
\begin{subequations}
\begin{align}
    \Phi_\text{pos}^\text{opt}(r) &= r(s_*) - x_0, \\
    \text{where } s_* &= \text{argmin}_{s\in[0,1]} \frac{1}{2}\|r(s) - x_0\|^2,\label{eq:closest-point}
\end{align}
\end{subequations}
meaning that the closest centerline coordinate $s_*$ to the target is computed via gradient descent in each step.

The $L^2$-gradients of the positioning tasks are $\nabla_{L^2} \Phi_{\text{pos}}^{\text{fixed}}(r) = \delta(s- \bar s) \in X$, and $\nabla_{L^2} \Phi_\text{pos}^\text{opt}(r) = \delta(s-s_*) \in X$, such that the Jacobians of the end-to-end mapping $J_{\Phi\circ\Lambda} \in \mathbb{R}^{3\times m}$ have $m$ columns
\begin{align}
    J_{(\Phi_\text{pos}^\text{fixed}\circ\Lambda):,i}(q_\text{a}) = \partial_{q_{\text{a},i}} \Lambda(q_\text{a})(\bar s),\label{eq:jac-comp-pos-fixed}\\
    J_{(\Phi_\text{pos}^\text{opt}\circ\Lambda):,i} (q_\text{a}) = \partial_{q_{\text{a},i}} \Lambda(q_\text{a})(s_*),\label{eq:jac-comp-pos-opt}
\end{align}
i.e., only the partial derivative of the actuation-to-shape mapping at the respective $s$-coordinate remains.

\textbf{Distance tasks.}
Scalar distance tasks follow the same structure. The fixed-coordinate variant minimizes squared distance from a prescribed centerline coordinate $\bar s \in [0,1]$
\begin{align}
    \Phi_\text{dist}^{\text{fixed}}(r) &= \frac{1}{2} \Vert r(\bar s) - x_0\Vert ^2.
\end{align}
Again, infinite-dimensional CLIK excels when minimizing the distance from the closest point on the centerline
\begin{align}
    \Phi_\text{dist}^{\text{opt}}(r) &= \min_{s \in [0,1]} \frac{1}{2} \Vert r(s) - x_0\Vert ^2,
\end{align}
where, the closest centerline coordinate $s_*$ is computed via gradient descent.
Assuming uniqueness and nondegeneracy of the minimizer (Infinite-dimensional Danskin's Theorem, cf. Theorem 4.13 in \cite{bonnans2013perturbation}), its directional derivative follows~\eqref{eq:task-to-shape-differential} with $L^2$-gradients $\nabla_{L^2} \Phi_{\text{dist}}^{\text{fixed}}(r) = (r(s_*)-x_0)\delta(s- s_*) \in X$, and $\nabla_{L^2} \Phi_\text{dist}^\text{opt}(r) = (r(s_*)-x_0)\delta(s-\bar s) \in X$, such that the row Jacobians of the end-to-end mapping $J_{\Phi\circ\Lambda} \in \mathbb{R}^{1\times m}$ have $m$ scalar entries
\begin{align}
    J_{(\Phi_\text{dist}^\text{fixed}\circ\Lambda),i} (q_\text{a}) &= \left[\partial_{q_{\text{a},i}} \Lambda(q_\text{a})(\bar s)\right]^\top(r(\bar s)-x_0), \\
    J_{(\Phi_\text{dist}^\text{opt}\circ\Lambda),i} (q_\text{a}) &= \left[\partial_{q_{\text{a},i}} \Lambda(q_\text{a})(s_*)\right]^\top(r(s_*)-x_0).
\end{align}

\begin{figure}
    \centering
    \includegraphics[width=0.8\linewidth]{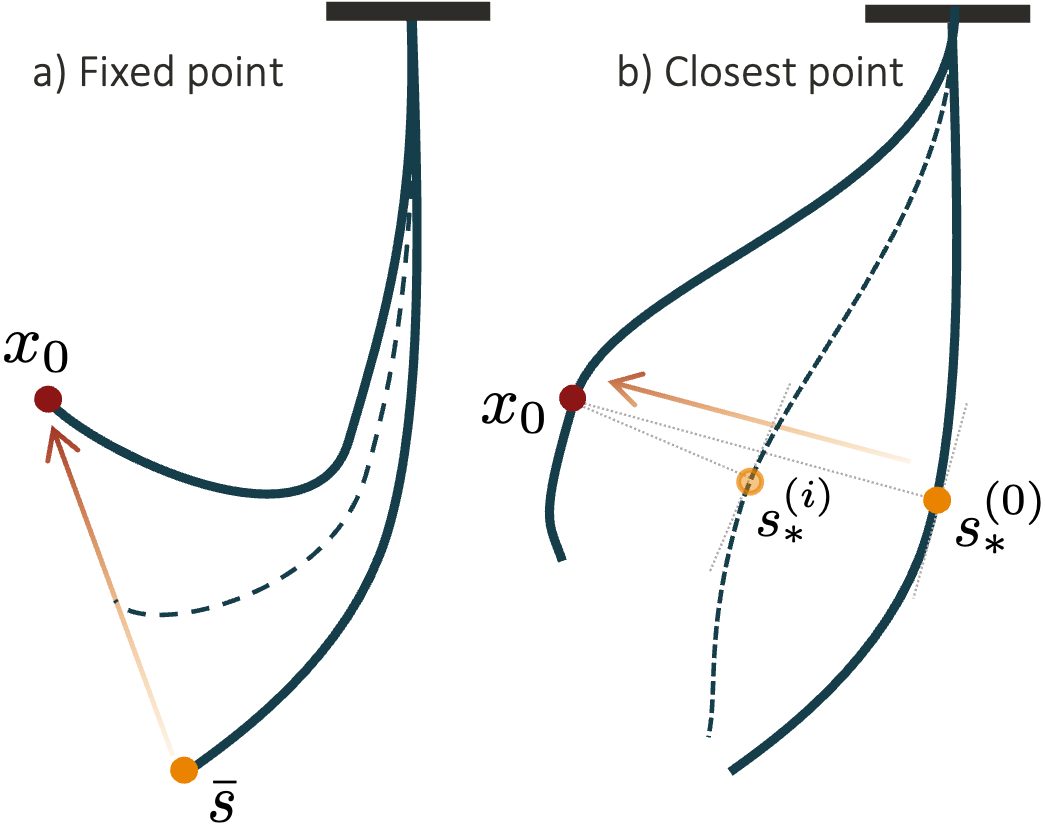}
    \caption{While standard tasks usually focus to position the end-effector $r(\bar s=1)$, the continuum nature of the soft robot invites for just reaching a target point with the closest point $r(s_*)$ on its backbone, where $s_*$ is adapted in each iteration.}
    \label{fig:tasks}
\end{figure}

\subsection{Toy Example: Constant Curvature Segment}
Let us now consider an analytical example to demonstrate the infinite-dimensional CLIK algorithm. For the sake of simplicity, and almost w.l.o.g., assume a planar, inextensible robot such that the only strain is the curvature $\kappa(s)$, where $s \in [0,1]$ is a normalized arc‑length coordinate and the physical arc length is $L$. This leads to infinite-dimensional mapping $\Lambda_\kappa$
that maps curvature functions to a backbone pose ($x,y,\alpha$), 
more precisely,
\begin{align}
    \Lambda_\kappa(\kappa)(s) = \begin{bmatrix}
        x(s) \\ y(s) \\ \alpha(s)
    \end{bmatrix}
    =\begin{bmatrix}
        L\int_0^s\cos(\alpha(v))dv\\
        L\int_0^s\sin(\alpha(v))dv \\
                \int_0^s\kappa(v)dv
    \end{bmatrix}.
\end{align}
Assuming a constant curvature $q$ along the segment such that $\kappa(s) \approx q$, and focusing on the centerline position $r(s)=[x(s),y(s)]$ without orientation, this simplifies to
\begin{align}
     \Lambda_\text{CC}(q)(s)=
     r(s)
    =\begin{bmatrix}
         L \frac{\sin(sq)}{q} \\
         L \frac{1-\cos(sq)}{q}
    \end{bmatrix}
\end{align}
with Jacobian
\begin{align}
    J_\text{CC}(q) 
    = 
    L 
    \begin{bmatrix}
    \frac{sq\cos(sq)-\sin(sq)}{q^2}   &
    \frac{(\cos(sq)-1)+sq\sin(sq)}{q^2} 
    \end{bmatrix}^\top.
\end{align}
Since the constant curvature segment has only one degree of actuation, namely this curvature itself, i.e. $q_\text{a}=q$, we require a one dimensional task such as the scalar distance task $\Phi_\text{dist}^{\text{fixed}}$.
The scalar Jacobian of the composition is then
\begin{align}
    J_{\Phi_{\text{dist}}^{\text{fixed}} \circ \Lambda_{\text{CC}}} (q) = J_\text{CC}(q)(\bar s)(r(\bar s) - x_0).
\end{align}
Reaching a point translates to eliminating the distance, i.e. $\bar x = \Phi_\text{dist}^{\text{fixed}}(\bar r) = 0$, such that the CLIK control is
\begin{align}
    \dot q = - K \left[J_{(\Phi_{\text{dist}}^{\text{fixed}} \circ \Lambda_{\text{CC}})}(q)\right]^{-1}\Phi_{\text{dist}}^{\text{fixed}}(\Lambda_\text{CC}(q))
\end{align}
with scalar weight $K>0$. The same holds for the closest-point version of the task $\Phi_\text{dist}^{\text{opt}}$ with $s_*$ instead of $\bar s$.

Figure~\ref{fig:cc-simulations} shows simulations of such a constant curvature segment reaching a target point at a fixed backbone coordinate (the tip position in this case), as well as the closest-point task where any point on the robot should reach the target as fast as possible. In both scenarios, the scalar weight is $K=10$ and we observe exponential convergence. For the closest-point task, the optimal coordinate $s_*$ is continuously updated to the point nearest to the target.

\begin{figure}[t!p]
    \centering
    \includegraphics[width=\columnwidth]{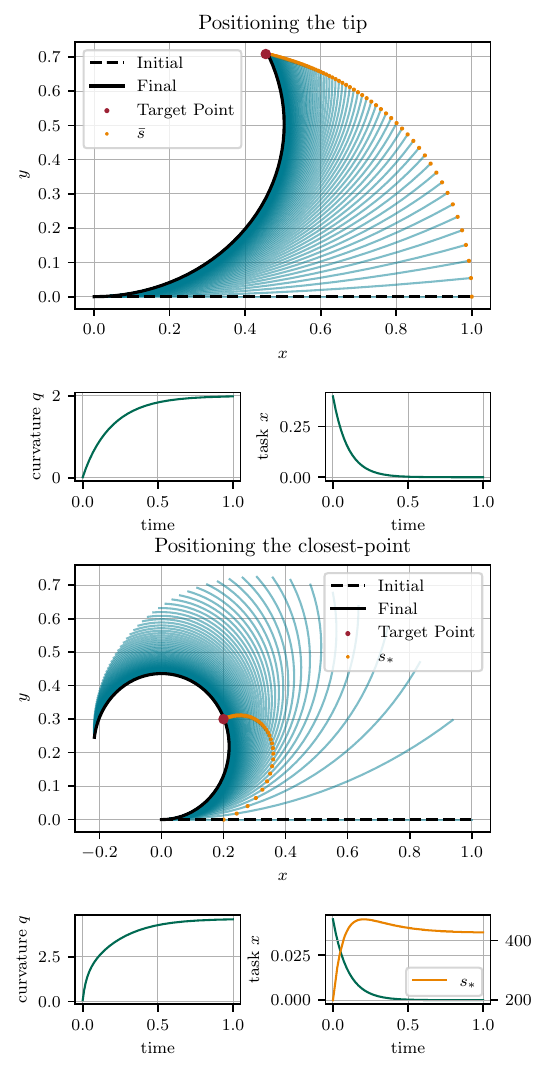}
    \caption{Constant curvature segment: Robot evolution and time evolutions ($t\in[0,1]$) of the actuation variable $q$ and the task variable $x$ for the point-reaching task $\Phi_{\text{pos}}^{\text{fixed}}$ (top) and the closest-point task $\Phi_{\text{opt}}^{\text{fixed}}$ (bottom). }
    \label{fig:cc-simulations}
\end{figure}

\section{Neural Closed-Loop Inverse Kinematics}\label{sec:neural-inf-clik}
While the previous example demonstrates how the infinite-dimensional CLIK works and what possibilities we have when being able to reason on the whole soft body shape while solving tasks, 
we recognize that the first of the two maps $\Lambda$ is very hard to obtain in closed form in
practice, especially once more complex soft robots are involved. This is why we propose to learn the soft robot model operator of the actuation-to-shape mapping $\Lambda$~\eqref{eq:actuation-to-shape}, denoted as $\mathcal{G}_\Lambda$, via \emph{neural operator networks}.

\subsection{Neural Operator Networks and Their Gradients}
The universal operator theorem \cite{chen1995Universal} shows that a neural network with one hidden layer can approximate any continuous nonlinear operators.
Extending this, the \emph{DeepONet} \cite{lu2021Learning} architecture allows for learning operators of the form $\mathcal{G}:u\mapsto \mathcal{G}(u)$, where both $u$ and $\mathcal{G}(u)$ are functions defined on (infinite-dimensional) spaces.
For any point $s$ in the domain of $\mathcal{G}(u)$, the value $\mathcal{G}(u)(s)$ is a real-valued vector in $\mathbb{R}^d$. To learn said operator, \emph{DeepONet} takes as inputs the sampled values $[u(x_1), u(x_2),..., u(x_m)]$ at a finite number of fixed locations ${x_1, x_2,..., x_m}$, together with the evaluation points $s$.
The architecture consists of two sub-networks, the branch network processing the input function, and the trunk network processing the evaluations points,
\begin{equation}
\begin{aligned}
\mathcal{G}^{\text{branch}}&: \mathbb{R}^m \to \mathbb{R}^{v \times d}, \quad u \mapsto a_{\text{br}}, \\
\mathcal{G}^{\text{trunk}}&:\mathbb{R} \to \mathbb{R}^{v\times d}, \quad  s \mapsto a_{\text{tr}}, \\
\end{aligned}
\end{equation}
where $v$ is a hyperparameter controlling the width of the latent representation. 
The final prediction is computed as the inner product of the outputs of the branch and trunk networks over the latent dimension
\begin{align}
\mathcal{G}(u)(s) &= a_{\text{br}} \odot a_{\text{tr}}, 
\end{align}
where $\odot$ denotes the element-wise multiplication over the latent dimension.
%
Once trained, the \emph{DeepONet} realization is a differentiable mapping with respect to all its inputs, so gradients with respect to both the input function samples and the evaluation coordinate $s$ are obtained directly by automatic differentiation.

In practice, the branch network is trained on finitely many function samples, while at inference time the trunk network can be evaluated at arbitrarily many, arbitrarily fine locations $s$. This decouples the discretization of the input function from the resolution of the output, which is a key advantage of neural operators over discretization-based neural networks, and makes it particularly powerful for infinite-dimensional CLIK where we want to reason over the continuous robot shape and might be interested in finer discretizations.

\subsection{Example: Three-fiber Soft Robotic Arm}
To demonstrate our neural infinite-dimensional CLIK algorithm, we consider a soft robotic arm initially designed with three contractable fibers, inspired by the elephant trunk musculature. It is depicted in Figure~\ref{fig:trunk}. By design, this structure is soft and slender, and can grow, shrink, or actively remodel its intrinsic shape while also deforming elastically -- which is why it is modelled as an \textit{active, morphoelastic filament} \cite{kaczmarski2022Active}. This model has been previously validated in experiments \cite{leanza2024Elephant,kaczmarski2024Minimal}, successfully used for path-planning algorithms \cite{veil2025shape}, and shall now serve as example for the neural operator implementation of infinite-dimensional CLIK. 
Its forward kinematics are governed by 
\begin{subequations}  \label{eq:fk}
\begin{align}
    r'(s) &= \zeta d_3,\\
    d_i'(s) &= \hat \zeta u \times d_i, \quad i=\{1,2,3\},\\
    0 &= \frac{\partial n}{\partial s} + \hat \zeta f  \\
    0 &= \frac{\partial m}{\partial s} + \frac{\partial r}{\partial s} \times n + \hat \zeta l.
\end{align}\label{eq:bvp}%
\end{subequations}
with centerline $r: [0, 1] \rightarrow \mathbb{R}^3$, normalized spatial coordinate $s\in[0,1]$, orthonormal director frame $D = \{d_1, d_2, d_3\}$, as well as extensions $\zeta$, curvatures $u$, 
internal force $n$, external force $f$, internal momentum $m$.
The mapping from actuation $q_\text{a}\in \mathbb{R}^3$ to shape $r(s)$ is mediated by the intrinsic curvature $\hat u$ and extension $\hat \zeta$, both of which are functions of the fiber activations, hence, the actuation, i.e., $\hat u(q_\text{a})$ and $\hat \zeta(q_\text{a})$. Details are found in \cite{kaczmarski2022Active, kaczmarski2024Minimal}.

\subsection{Learning the Model with a Neural Operator}
In our application, the architecture of the operator network is simplified compared to \emph{DeepONet} \cite{lu2021Learning}, since the actuation coordinates $q_\text{a}\in\mathbb{R}^m$ are finite-dimensional. Consequently, the branch net reduces to a standard feedforward network, while the trunk net only encodes the spatial variable $s$. We still retain the key property that the operator is learned over actuation and space, so we can evaluate the learned actuation-to-shape mapping $\mathcal{G}_\Lambda$ at arbitrarily many points $s$ and differentiate with respect to $q_\text{a}$.


\begin{figure}[t!p]
    \centering
    \includegraphics[width=0.92\columnwidth]{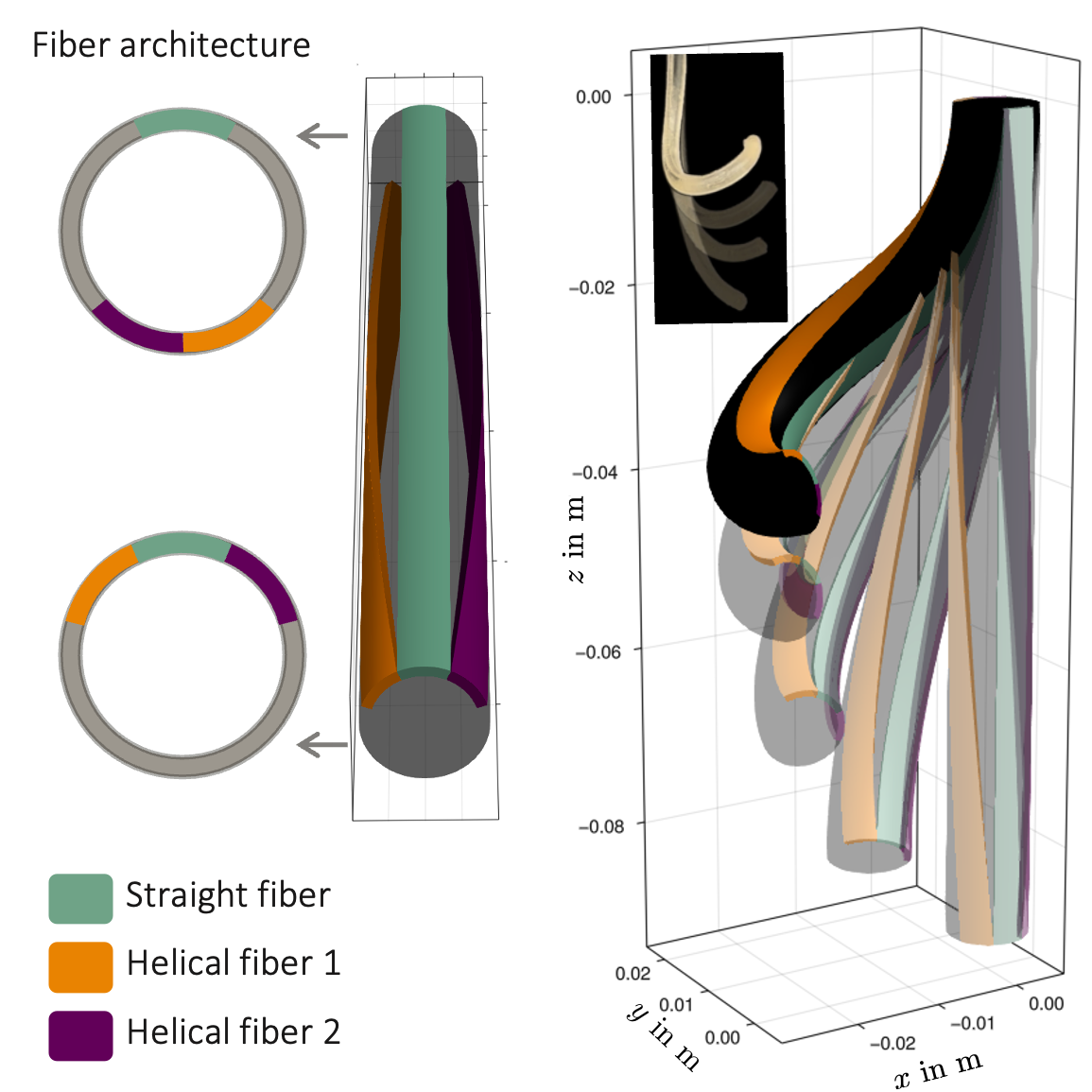}
    \caption{Three-fiber soft robotic arm \cite{kaczmarski2024Minimal, leanza2024Elephant}: The orientation of the fibers is inspired by the muscular structure of the elephant trunk, with one straight fiber and a helical fiber pair, enabling continuous deformations and optimal reachability.}
    \label{fig:trunk}
\end{figure}
 
\textbf{Data.}
For data generation, we solve the boundary value problem~\eqref{eq:bvp} $N=1,000,000$ times to obtain samples of the actuation-to-shape mapping for random activations within the physical constraints of the system $q_{\text{a},i}\in[-1.67,0]$, $i=1,2,3$. Note that $q_{\text{a}}=0$ corresponds to the steady state of the system in which the soft robot hangs downward, and negative actuation leads to fiber contraction relative to this state. Histograms of the activations show that the entire actuation space is sufficiently represented in the dataset. The spatial coordinate $s$ is discretized with $n_\text{s}=100$ points in $[0,1]$.
We use the \textit{Julia} package developed for the active filament theory \cite{kaczmarski2022Active} and the associated parameters of the minimal three fiber prototype \cite{leanza2024Elephant}. 
We store each solution as a discrete centerline in a matrix $R \in \mathbb{R}^{n_s \times 3}$ with its respective $(x,y,z)$-coordinates at the $n_s$ locations. 

\begin{figure*}[t!p]
\begin{minipage}{0.49\textwidth}
    \centering
    \includegraphics[width=\columnwidth]{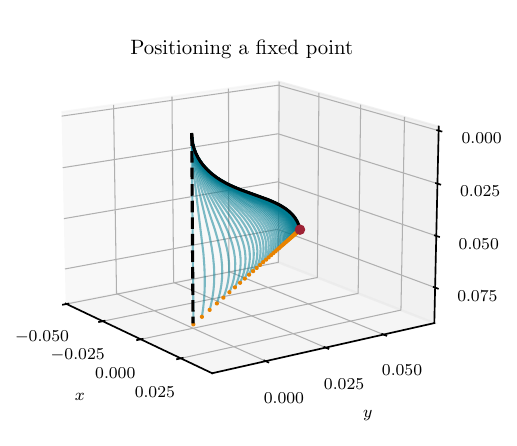}
    \includegraphics[height=0.8cm]{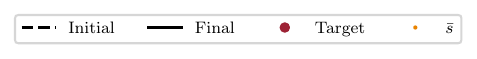}
    \includegraphics[width=\columnwidth]{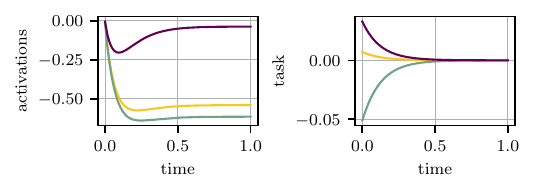}
    \begin{minipage}{0.49\columnwidth}
        \centering
        \hspace*{0.25cm}
        \includegraphics[height=0.8cm]{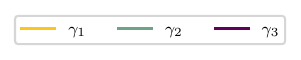}
    \end{minipage}
    \begin{minipage}{0.49\columnwidth}
        \centering
        \hspace*{0.5cm}
        \includegraphics[height=0.8cm]{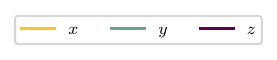}
    \end{minipage}
\end{minipage}
\hspace*{0.1cm}
\begin{minipage}{0.49\textwidth}
    \vspace*{-0.26cm}
    \centering
    \includegraphics[width=\columnwidth]{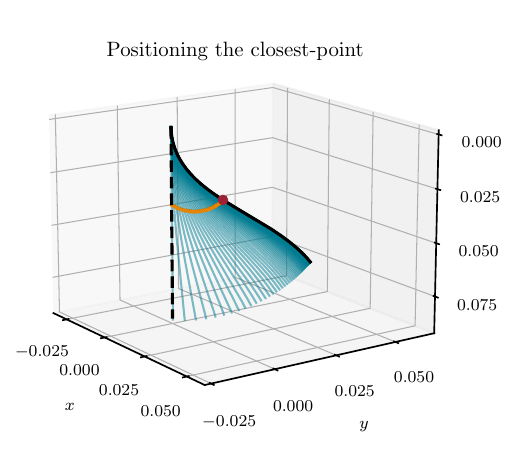}
    \includegraphics[height=0.8cm]{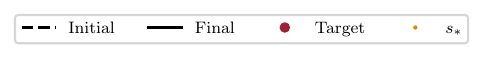}
    \includegraphics[width=\columnwidth]{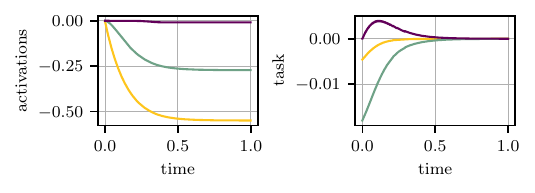}
    \begin{minipage}{0.49\columnwidth}
        \centering
        \hspace*{0.25cm}
        \includegraphics[height=0.8cm]{fig/neural_gamma_legend.pdf}
    \end{minipage}
    \begin{minipage}{0.49\columnwidth}
        \centering
        \hspace*{0.5cm}
        \includegraphics[height=0.8cm]{fig/neural_x_legend.pdf}
    \end{minipage}
\end{minipage}
\caption{Three-fiber soft robotic arm. Robot evolution (\mlLineLegend{lagunita}) and time evolutions of the actuation variable (\emph{activations} $\gamma_i$) and the task variable $x$ for the fixed point-reaching task $\Phi_\text{pos}^\text{fixed}$ (left) closest-point task $\Phi_\text{pos}^\text{opt}$ (right). For the robot evolution, we plot every 10th shape.}
\label{fig:neural-clik}
\end{figure*}

\textbf{Training process.}
Of the 1,000,000 samples, 200,000 are held out as a test set. From the remaining data, 640,000 samples are used for training and 160,000 for validation. We train the operator network for 500 epochs with the Adam optimizer, \texttt{tanh} activations, batch size 32, and an exponentially decaying learning rate, using the validation loss to select the best model checkpoint (typically reached after 400-450 epochs). On the held-out test set, the trained model achieves a mean squared error of $1.38\times 10^{-10}$ and an $L^2$ relative error of $6.08\times 10^{-4}$.
The architecture and test performance are summarized in Table~\ref{tab:hyperparameters}.

\begin{table}[h]
\centering
\begin{tabular}{ll}
\toprule
\textbf{Architecture and Training}  \\
\toprule
Optimizer               & \texttt{Adam} \\
Activation              & \texttt{tanh} \\
Learning rate           & \texttt{Exponential decay} \\
Epochs                  & \texttt{500} \\
Batch size              & \texttt{32} \\
Branch layers           & \texttt{[3, 64, 64, 64, 192]} \\
Trunk layers            & \texttt{[1, 64, 64, 64, 192]} \\ 
\bottomrule
\textbf{Test Performance} \\
\toprule
MSE                 & $1.38\times 10^{-10}$  \\
L2 relative error   & $6.08\times 10^{-4}$  \\ 
\bottomrule
\end{tabular}
\caption{Neural operator hyperparameters and performance.}
\label{tab:hyperparameters}
\end{table}

\subsection{Composed Jacobian and Neural CLIK}
The soft robot has three actuators, hence, we consider the three-dimensional tasks $\Phi_\text{pos}^\text{fixed}$ and $\Phi_\text{pos}^\text{opt}$ that position the robot with respect to a target point in $(x,y,z)$ coordinates.
From automatic differentiation, we obtain the gradient of the operator network $\mathcal{G}_\Lambda$ with respect to the input $q_\text{a}$.
With the task gradients being Dirac at $\bar s$ or $s_*$, the Jacobian of the composition is the evaluation of the operator network gradient at the required spatial coordinate $\bar s$ or $s_*$ according to~\eqref{eq:jac-comp-pos-fixed},~\eqref{eq:jac-comp-pos-opt}, namely 
\begin{align}
  J_{\Phi_\text{pos}^\text{fixed}\circ \mathcal{G}_\Lambda}(q_\text{a}) &= \frac{\partial \mathcal{G}_\Lambda(q_\text{a})}{\partial q_{\text{a}}}(\bar s) \in \mathbb{R}^{3 \times 3},\\
    J_{\Phi_\text{pos}^\text{opt}\circ \mathcal{G}_\Lambda} (q_\text{a}) &= \frac{\partial \mathcal{G}_\Lambda(q_\text{a})}{\partial q_{\text{a}}}(s_*) \in \mathbb{R}^{3 \times 3}.
\end{align}
Reaching a point translates to a desired task variable $\bar x = \Phi_\text{pos}(\bar r) = [0,0,0]$, and the control law follows as~\eqref{eq:clik-infinite}.

Figure~\ref{fig:neural-clik} shows simulations of the neural version of the infinite-dimensional CLIK algorithm for the two distinct positioning tasks. The simulation is for $t\in [0,1]$ with a time step of $dt=0.001$. The weighting matrix is $K=8\cdot\text{diag}(1,1,1)$, i.e., all positioning coordinates are equally weighted, and the magnitude is a balance between convergence speed and overshooting. We use a spatial discretization with $n_\text{s}=100$ points in this example, but this discretization could be arbitrarily finer.
The code and dataset will be made available on \emph{GitHub} upon acceptance.

\section{Conclusion}\label{sec:conclusion}
In this paper we extend classical closed-loop inverse kinematics (CLIK) to the infinite-dimensional shape space of soft robots, enabling controllers to reason directly on the entire robot shape while solving tasks. 
By composing an actuation-to-shape map with a shape-to-task map and applying an infinite-dimensional chain rule, we derive the corresponding differential end-to-end kinematics and obtain a Jacobian-based CLIK algorithm. Since the actuation-to-shape mapping typically lacks closed-form expressions, we show how to learn it from simulation data using differentiable neural operator networks. 
The simulations focus on simple positioning tasks, yet the framework naturally extends to more complex objective functionals, and future work will explore how to incorporate obstacles or task-space trajectory tracking. Experimental validation is necessary to assess how the algorithm performs under model mismatch between the simulated (learned) operator and the real robot.

\bibliography{main.bib}
\bibliographystyle{IEEEtran}





\end{document}